\documentclass[10pt, journal]{IEEEtran}  

\IEEEoverridecommandlockouts     

\usepackage{color,amsmath,amsthm,amstext,amsfonts,amssymb, mathrsfs,mathtools}
\usepackage{graphicx,algorithm}
\usepackage[algo2e,linesnumbered]{algorithm2e}
\usepackage{tikz}
\usepackage{setspace}
\usepackage{array} 
\usepackage{booktabs}  
\usepackage{bbm}
\usepackage{wasysym}
\usepackage{textcomp}
\usepackage{hyperref} 
\usepackage{caption}
\usepackage{subcaption}
\usepackage[sort,compress]{cite}

\newtheorem{lemma}{Lemma}

\newtheorem{theorem}{Theorem} 
\newtheorem{definition}{Definition}

\theoremstyle{remark}
\newtheorem{remark}{Remark}

\SetKwInput{KwInput}{Input}

\newcounter{l1}
\newcounter{l2}
\newcounter{l3}
\setlength{\itemsep}{0cm} \setlength{\itemindent}{0in}
\newcommand{\bdotlist}{\begin{list}{$\bullet$}{}}
\newcommand{\bboxlist}{\begin{list}{$\Box$}{}}
\newcommand{\bbboxlist}{\begin{list}{\raisebox{.005in}{{\tiny $\blacksquare$ \ \ }}}{}}
\newcommand{\bdashlist}{\begin{list}{$-$}{} }
\newcommand{\blist}{\begin{list}{}{} }
\newcommand{\barablist}{\begin{list}{\arabic{l1}}{\usecounter{l1}}}
\newcommand{\balphlist}{\begin{list}{(\alph{l2})}{\usecounter{l2}}}
\newcommand{\bAlphlist}{\begin{list}{\Alph{l2}.}{\usecounter{l2}}}
\newcommand{\bdiamlist}{\begin{list}{$\diamond$}{}}
\newcommand{\bromalist}{\begin{list}{(\roman{l3})}{\usecounter{l3}}}

\newcommand{\beq}{\begin{equation}}
\newcommand{\eeq}{\end{equation}}

\let\[\equation
\let\]\endequation

\newcommand{\tn}{\textnormal}

\DeclarePairedDelimiterX{\Norm}[1]{\lVert}{\rVert}{#1}

\DeclareMathOperator*{\argmin}{arg\,min}

\allowdisplaybreaks

\title{Online Learning for Incentive-Based Demand Response}


\author{Deepan Muthirayan, and Pramod P. Khargonekar
\thanks{This work is supported in part by the National Science Foundation under Grant ECCS-1839429.
D. Muthirayan and P. P. Khargonekar are with the Department of Electrical Engineering and Computer Sciences, University of California Irvine, Irvine, CA (emails: deepan.m@uci.edu, pramod.khargonekar@uci.edu).}
}

\begin{document}

\maketitle

\begin{abstract}
In this paper, we consider the problem of learning online to manage Demand Response (DR) resources. A typical DR mechanism requires the DR manager to assign a baseline to the participating consumer, where the baseline is an estimate of the counterfactual consumption of the consumer had it not been called to provide the DR service. A challenge in estimating baseline is the incentive the consumer has to inflate the baseline estimate. We consider the problem of learning online to estimate the baseline and to optimize the operating costs over a period of time under such incentives. 
We propose an online learning scheme that employs least-squares for estimation with a perturbation to the reward price (for the DR services or load curtailment) that is designed to balance the exploration and exploitation trade-off that arises with online learning. We show that, our proposed scheme is able to achieve a very low regret of $\mathcal{O}\left((\log{T})^2\right)$ with respect to the optimal operating cost over $T$ days of the DR program with full knowledge of the baseline, and is individually rational for the consumers to participate. Our scheme is significantly better than the averaging type approach, which only fetches $\mathcal{O}(T^{1/3})$ regret. 
\end{abstract}

\section{Introduction}

Demand Response (DR) programs \cite{Albadi2008} are potentially powerful tools to modulate the demand for electricity in a wide variety of situations. For example, at certain times such as mid-afternoons on hot summer days, the supply of additional electric power is scarce and expensive. At these times, it is more cost-effective to reduce demand than to increase supply to maintain power balance. Another scenario is a grid with high renewable penetration. Here, DR promises to be a better alternative compared to other expensive and polluting reserves to balance the variability in renewable generation. Realizing its potential, the 2005 Energy Policy Act provided the Congressional mandate to promote DR in organized wholesale electricity markets. The FERC order 745 \cite{federal2011demand} met this mandate by prescribing that demand response resource owners should be allowed to offer their demand reduction as if it were a supply resource rather than a bid to reduce demand so that the market operates fairly.

Dynamic pricing based DR programs 
can ideally achieve market efficiency, but they require complex metering and communication infrastructure to achieve this, which raises their implementation costs 
\cite{mathieu2013residential}. Furthermore, consumers may not be responsive to dynamic pricing \cite{faruqui2011dynamic}. Alternatively, consumers could be signaled to reduce consumption and paid for their load reductions. Such schemes are referred to as Incentive-based DR programs or Demand Reduction programs. There are two key components of any incentive-based DR program: (a) a baseline against which demand reduction is measured, (b) a payment scheme for agents who reduce their consumption from the baseline.

Thus, incentive-based DR programs require an established baseline against which consumer's load reduction is measured. The baseline is an estimate of the consumption when the consumer is not participating in the DR program. There are several ways to approach the estimation of baseline. One could, for example, use data to estimate the baseline. For example, the California Independent System Operator (CAISO) uses the average of the consumption on the ten most recent non-event days as the baseline estimate~\cite{CaisoDR2017}. 
Data driven approaches can be broadly classified as (a) averaging, (b) regression, and (c) control group methods. Typically, these methods are prone to baseline manipulation 
\cite{vuelvas2017rational}. There have have been reported cases in the past where the participants have artificially inflated their baseline for increasing payments \cite{gaming-examples}. Another class of approaches are based on {\it mechanism design}, where the consumers are elicited to report their baselines \cite{muthirayan2019mechanism, muthirayan2019minimal, satchidanandan2022two}. These approaches rely on suitably designed payment schemes to ensure that the manipulation or gaming of the reporting is minimal.

While many data-driven approaches for estimating baseline have been proposed in the literature \cite{coughlin2008estimating, grimm2008evaluating, mathieu2011quantifying, wijaya2014bias, nolan2015challenges, weng2015probabilistic, zhang2016cluster, nolan2015challenges, zhou2016forecast, hatton2016statistical, wang2018synchronous}, they typically consider the offline setting where sufficient data is available for estimation prior to the start of the DR program. The limitation is that these approaches cannot be used when the data is limited or when the underlying conditions can change. The mechanism design approaches can avoid the need for learning, but have limitations because the consumers can be unwilling to reveal their baselines or can even be unaware of their baselines. These considerations necessitate approaches that can learn online while running the DR program without needing the consumers to report any information. 



{\bf Contribution}: In this work, we consider the problem of managing DR resources where a participating consumer's baseline is to be estimated online, i.e., while running the DR program. We consider the setting where the DR program can only learn from the consumption data that it gathers over the course of time. 
The unique challenge of our setting is that the DR program manager has to simultaneously learn the consumers' baselines and optimize its operating costs with the information it gathers along the way. This makes this problem a typical online learning problem. 
Therefore, the exploration-exploitation trade-off in any online learning problem applies to our problem as well. The added complexity in a setting like ours is the incentive the consumers can have to interfere with the learning in order to inflate the baseline estimation. We formulate the online learning DR problem that incorporates all these characteristics and propose a  online learning DR scheme for this problem. We show that our method achieves $\mathcal{O}((\log{T})^2)$ regret with respect to the optimal operating cost over $T$ days of the DR program and is individually rational for the consumers to participate with an upfront payment for participation. Our main contribution is an online learning scheme for (incentive-based) DR that converges to the optimal cost with a very low regret and is at the same time individually rational. Ours is also the first work to formally study incentive-based DR as an online learning problem and present algorithms and regret guarantees.

\subsection{Related Works}

There exists substantial literature on baseline estimation methods \cite{coughlin2008estimating, grimm2008evaluating, mathieu2011quantifying, wijaya2014bias, nolan2015challenges, weng2015probabilistic, zhang2016cluster, nolan2015challenges, zhou2016forecast, hatton2016statistical, wang2018synchronous, muthirayan2019mechanism, muthirayan2019minimal}. These can be broadly classified into four classes: (a) averaging, (b) regression, (c) control group methods and (d) baseline reporting approaches.

\emph{Averaging methods} determine baselines by averaging the consumption on past days that are similar (\emph{e.g.,} in weather conditions) to the event
day. A detailed comparison of  different averaging methods is offered in \cite{coughlin2008estimating, grimm2008evaluating, wijaya2014bias}. Averaging
methods are simple but they suffer from estimation biases \cite{wijaya2014bias, weng2015probabilistic, nolan2015challenges}. \emph{Regression methods} estimate a load prediction model based on historical data which is then used to predict the baseline \cite{zhou2016forecast, mathieu2011quantifying}. They can potentially overcome biases incurred by averaging methods \cite{nolan2015challenges}. 
\emph{Control group methods} have been suggested to have better accuracy than averaging or regression type methods and do not require large amounts of historical data \cite{hatton2016statistical}. However, these methods require the SO to recruit an additional set of consumers and install additional metering infrastructure. In addition, prior data based analysis might be required to identify the most appropriate control group depending on the control group method deployed. This can raise the costs of implementation \cite{hatton2016statistical}. 


\emph{Baseline reporting approaches} were proposed in \cite{muthirayan2016mechanism, muthirayan2018baseline, muthirayan2019minimal, satchidanandan2022two} as an alternative baseline estimation method. These approaches employ the framework of mechanism design to design payment and selection schemes to ensure that the consumers report the correct baseline values. While these methods can provably reduce the baseline error from that of averaging type methods \cite{muthirayan2019mechanism}, they violate privacy and are infeasible when the consumers can be unaware of their baselines. 

In contrast to the above approaches, we propose an online approach that does not require large quantities of historical data, or a control group, or reporting private information.

{\bf Notation}: We denote the expectation over a probability distribution by $\mathbb{E}[\cdot]$. We use $\mathcal{O}(\cdot)$ for the standard big-O notation while $\widetilde{\mathcal{O}}(\cdot)$ denotes the big-O notation neglecting the poly-log terms. 
We denote the sequence $(x_{m_1}, x_{m_1+1}, \dots , x_{m_2})$ compactly by $x_{m_{1}:m_{2}}$ and the sequence $(x^{m_1}, x^{m_1+1}, \dots , x^{m_2})$ compactly by $x^{m_{1}:m_{2}}$.

\section{Demand Response Formulation}

We consider the problem of managing Demand Response (DR) in an online setting, where the consumers' utility functions are unknown to the System Operator (SO) and the SO has to learn the necessary consumer relevant parameters online. In a typical demand response program, the SO recruits consumers for demand response and calls them to provide load curtailment on certain days. To incentivize the consumers to curtail, the SO typically pays a certain price (reward/kWh) for the load curtailment the consumers provide. Therefore, the consumer's response depends on the incentive or the reward to reduce, which is the price set by the SO. In addition to the payment for the DR services, the SO incurs an additional cost for serving the final consumption after load curtailment. Thus, the total cost for the SO depends on the payments for the DR services and the cost to serve the final load after the curtailment.

Typically, the SO can only observe the final consumption and not the load curtailment. Therefore, in addition to the price (reward/kWh), the SO needs to specify a baseline consumption to quantify the load curtailment. Baselines are estimates of the power that would be consumed had the consumer not been called to provide load curtailment. The SO, typically, announces the mechanism to assign the consumer's baseline to the consumers participating in the DR program. Thus, the SO's DR policy is the procedure to set the price (reward/kWh) and the baseline. The objective of the System Operator is to choose a DR policy that minimizes its overall cost. 

We note that it is impossible for the SO to avoid under payment or over payment for the load curtailment without the knowledge of the consumer's correct baseline, which the SO need not know apriori. Here, we consider the setting, where the SO learns to set the correct baseline during the course of the DR program. The price and the baseline that the SO sets can vary from one day to the other and can be adapted depending on the response that the SO observes over its operation. The SO on any day has the following information: (i) the price for DR on all the previous days (ii) the baseline set for all the previous days and (iii) the final consumption of the consumers on all the previous days. The SO can use this information to set the price and the baseline consumption. Since the SO has to learn with the observations made on the fly and there is a cost that the SO incurs every day, this problem in effect is an online learning problem.

Like in a typical online learning problem, the SO has to balance the exploration and exploitation trade-off. Specifically, in this context, the SO cannot afford to set a constant price throughout to learn the correct baseline. This is because the total payment is a function of the assigned baseline, which creates an incentive for consumers to modify their consumption so as to inflate their future baselines and thence their future payments. Therefore, the SO has to be strategic in how the prices are set initially so as to infer the correct baseline over time. This is the exploration part. The exploration thus has to be balanced against deviating from the optimal outcome so as to ensure that on average the SO does not deviate from the best outcome. 
We characterize the effectiveness of the SO's DR policy, as is done typically in online learning, through a metric called {\it regret}, which in our case is the difference between the cumulative cost over a set of days and the achievable optimal cost with the full baseline information over the same set of days. Our objective is to develop an online learning scheme that achieves sub-linear regret and thereby achieve an outcome that on average converges to the best DR outcome. 

\subsection{DR Setting}

We index the days by $t$. We denote the number of consumers participating in the DR program by $N$. The SO, before any given day, decides whether to call a DR event or not. If it decides to call a DR event, it assigns a baseline $\hat{b}^i_t$ to the $i$th consumer participating in the DR program and the price for DR $p_t$ (reward/kWh) prior to day $t$. 
The consumers are paid at the price $p_t$ for the reduction of consumption from the assigned baseline. The price $p_t$ is a reward or incentive for the consumer to reduce its consumption. The price and the baseline is set by the SO using the following information: (i) the price for DR on all the previous days (ii) the baseline set for all the previous days and (iii) the final consumption of the consumers on all the previous days. Thus, the SO can adapt its price and the baselines online as it makes newer observations. As in any DR program, the SO announces the procedure for setting the price for DR and the baseline prior to the first day, which is the SO's DR policy. 

\subsection{Consumer Model}
\label{sec:con-mod}

We denote the electricity consumption of a consumer $i$ on day $t$ by $q^i_t$. Then, the payment $P$ to consumer $i$ for curtailing from $\hat{b}^i_t$ is given by
\beq
P^i_t = p_t(\hat{b}^i_t - q^i_t).  \nonumber 
\label{eq:reward}
\eeq

We denote the utility that the consumer derives from the electricity consumption $q^i_t$ by
\begin{equation}
u^i_t = u^i(q^i_t,\epsilon^i_t) = \left( a^i + \epsilon^i_t \right) q^i_t - \frac{d^i (q^i_t)^2 }{2},  \nonumber 
\label{eq:con-ut}
\end{equation}
where 
$\epsilon^i_t$ is a zero mean random variable and models the unpredictability or the uncertainty in the consumer's behavior. The assumption is that, by day $t$, the consumers observe their respective $\epsilon^i_t$s and that this information is private to them. 

As in a typical power market, the consumers pay a retail price to the electricity provider for their daily consumption. We denote the retail price that the consumers pay by $p_0$. Therefore, the net utility to consumer $i$ on day $t$ is given by
\beq
U^i_t(q^i_t) = u^i_t - p_0q^i_t + P^i_t. \nonumber 
\label{eq:netut-con}
\eeq

The correct average baseline $\tilde{b}^i$ for a consumer $i$ can be derived from the consumer's utility function. Following the definition that the correct baseline is the optimal consumption when the consumer is not called to provide DR, the correct average baseline for a consumer $i$ is given by
\beq
\tilde{b}^i =  \mathbb{E}_{\epsilon^i_t} b^i_t = \frac{a^i - p_0}{d^i}, ~~ \tn{where} ~ b^i_t = \frac{a^i + \epsilon^i_t - p_0}{d^i}. \nonumber 
\label{eq:av-bas}
\eeq 
The optimal consumption in the hypothetical case when the set baselines are fixed to the correct values and do not depend on the past consumption can be derived by minimizing $U^i_t$ individually. Therefore, the consumption for this hypothetical case is given by
\beq
s^i_t(p_t) = \argmin_{q^i_t} U^i_t(q^i_t) = \frac{a^i + \epsilon^i_t - p_0 - p_t}{d^i}.   
\label{eq:con-c}
\eeq



{\it Consumer's Optimal Decision}: In a DR setting, since a consumer's current consumption determines the future baseline and payments, the consumer typically has an incentive to modify its consumption to influence the future baselines and the DR payments. To model this effect, we consider the setting, where a consumer's current decision is also determined by its effect on the outcome of the next $m$ days. In this setting, the optimal consumer response on a day $t$ is given by
\beq
q^{*i}_t = \arg \max_{q^i_{t:t+m}} \mathbb{E}\sum_{s = t}^{t+m}U^i_s(q^i_s),  
\label{eq:optcondec}
\eeq
where expectation is over all randomness in $\epsilon^i_s, p_s, \hat{b}^i_s$ for all $s > t$. 

\subsection{System Operator's Objective}

The system operator's decision variables on a day $t$ are the price for DR and the baselines, which we collectively denote by $(p_t, \hat{b}^{1:N}_t)$. 
We denote the aggregate of the assigned baselines on a day $t$ by
\beq
\hat{b}_t = \sum_{i=1}^{N} \hat{b}^i_t. \nonumber 
\label{eq:aggbase}
\eeq
Similarly, we denote the aggregate consumption on a day $t$ by
\beq
q_t = \sum_{i=1}^{N} q^i_t. \nonumber 
\label{eq:aggcon}
\eeq

Typically, the SO has to procure power from an external market to serve the demand of the consumers. Therefore, the SO incurs a cost for procuring the power consumed by the consumers. We denote the cost of procuring an unit of power by $c$. Therefore, the total cost that is incurred by the SO on a day $t$ is the sum of the purchase cost and the cost for DR:
\beq
C_t(p_t, \hat{b}_t) = c q_t + p_t (\hat{b}_t - q_t). \nonumber
\label{eq:so-cost}
\eeq
Therefore, the SO's expected cost on day $t$ conditioned on the set baseline and the price is given by
\beq
\widetilde{C}_t(p_t, \hat{b}_t) = \mathbb{E}[C_t(p_t, \hat{b}_t) \vert p_t, \hat{b}_t] =  c \tilde{q}_t + p_t (\hat{b}_t - \tilde{q}_t), \nonumber 
\label{eq:so-cost}
\eeq
where $\tilde{q}_t  = \mathbb{E} q_t$. In the analysis, we assume that consumer chooses $q_t$ according to Eq. \eqref{eq:optcondec}. 

{\bf SO's Objective:} Let the price that minimizes SO's expected cost when the baselines are set to the correct values be denoted by $p^{*}$. Then, the consumption under this price, with the baselines set to the correct values, is given by $s^{*i}_t = s^i_t(p^{*})$ for all $i$. Therefore, the optimal expected cost for the SO, when the baselines are set to the correct values, is given by
\beq
\widetilde{C}^{*}_t  =  c \tilde{s}^{*}_t + p^{*} (\tilde{b} - \tilde{s}^{*}_t), \nonumber 
\label{eq:mincost} 
\eeq
where $\tilde{s}^{*}_t = \mathbb{E}_{\epsilon_t} s^{*}_t$ and $s^{*}_t =  \sum_{i = 1}^{N} s^{*i}_t$, $\tilde{b} =  \sum_{i = 1}^{N} \tilde{b}^{i}$. Since the primary objective of the SO is not to inflate the baseline and over pay, it is reasonable to define the regret with respect to the total optimal cost when the baselines are set to the correct values. Therefore, we define the SO's expected regret over a time period $T$, under a DR policy, as
\beq
R_T = \sum_{t = 1}^{T} \left( \mathbb{E} [\widetilde{C}_t(p_t, \hat{b}_t)]  - \widetilde{C}^{*}_t\right). \nonumber  
\label{eq:regret}
\eeq
The SO's objective is to prescribe a DR policy such that
\beq
\lim_{T \rightarrow \infty} \frac{R_T}{T} = 0 ~~ (\tn{No Regret}). \nonumber 
\label{eq:so-obj}
\eeq
The SO has to achieve zero regret on average while ensuring that the consumer's individual rationality is satisfied on average, i.e.,
\begin{align} 
& \lim_{T \rightarrow \infty} \frac{\mathbb{E}[\sum_{t = 1}^T U^i_t(q^i_t)] - TU^{*i}}{T} \geq 0~ \forall ~ i, \nonumber \\
& ~ (\tn{Individual Rationality}), \nonumber \\
& ~ \tn{where} ~ U^{*i} = \mathbb{E}[u^i(s^i_t(0),\epsilon^i_t) - p_0s^i_t(0)].   
\label{eq:con-indrat}
\end{align}

\begin{remark}[Individual Rationality]
The individual rationality condition in Eq. \eqref{eq:con-indrat} is essential, since, otherwise the SO can set a very low baseline and under pay the consumers for the DR services. Thus, this condition is essential in the formulation. Moreover, the consumer will not participate in the DR program if the consumer does not receive a benefit that is on average at least as much as the benefit when not participating in the DR program. Therefore, to enforce this constraint, we set $U^{*i}$ in the individual rationality condition as the optimal expected utility for a consumer when not participating in the DR program.
\end{remark}

\begin{remark}[Regret Definition]
The question is whether $\widetilde{C}^{*}_t$ is appropriate as the cost to be compared with in the regret. It can be shown that if the SO inflates the baseline by a certain quantity $\Delta b > 0$ then the optimal expected cost necessarily increases till the incentives for the consumers to participate in the DR program are positive. 
Therefore, given that the primary objective is to mitigate over payment while ensuring individual rationality, it follows that $\widetilde{C}^{*}_t$ is the right candidate for the cost to be compared with. 
\end{remark}

\section{Online Learning DR Mechanism}

In this section, we discuss our algorithm and present the properties of our algorithm formally. 

We recall that $q_1, q_2, q_3,....$ denote the sequence of consumption by the consumer and $p_1, p_2, . . . $ denote the sequence of price set by the SO for DR. The SO, at the end of a day $t$,  calculates a $\hat{b}^i_{1,t+1}$ and $\hat{b}^i_{t+1}$ for each consumer $i$ by
\begin{equation}
\left[ \begin{array}{c} \hat{b}^{e,i}_{1,t+1} \\ \hat{b}^{e,i}_{t+1} \end{array}\right] =\arg \min_{\hat{b},\hat{b}_1} \sum_{k=1}^t (q^i_k - (\hat{b} - \hat{b}_1 p_k))^{2}. 
\label{eq:ls-est-mulcon}
\end{equation}
The SO then assigns 
$\hat{b}^i_{t+1} = \hat{b}^{e,i}_{t+1}$ as the baseline for day $t+1$ to the $i$th consumer and calls all the consumers to provide DR service. The price $p_t$ for DR is given by 
\begin{equation}
p_t = p^* + \delta p e^{-t},
\label{eq:pr-dr-mul}
\end{equation}
where $\delta p$ is a constant. In addition, the SO also pays a payment $P_o$ to the consumer upfront. This payment is needed for meeting the individual rationality condition. Later, we give the specific form of this payment. 

\LinesNumberedHidden{\begin{algorithm}[]
\DontPrintSemicolon
\KwInput{$N, P^i_o$ for each $i \in [1,N]$}

Make the payment $P^i_o$ to each $i \in [1,N]$.

Announce the price sequence for the DR program as given by Eq. \eqref{eq:pr-dr-mul} and the process of baseline estimation.

{\bf Initialize} $\hat{b}^i_1$ for each $i \in [1,N]$ arbitrarily.

\For{$t = [1,T]$}{

Assign $\hat{b}^i_t$ as the baseline for each $i \in [1,N]$.

Set $p_t$ as the price for DR.

Receive the demand request $q^i_t$ from each consumer $i \in [1,N]$.

Serve $q^i_t$ to each consumer $i \in [1,N]$.

Incur the purchase cost $cq_t$ and the DR cost $p_t (\hat{b}_t - q_t)$.

Update $\hat{b}^i_t \rightarrow \hat{b}^i_{t+1}$ for each $i \in [1,N]$ according to Eq. \eqref{eq:ls-est-mulcon}.
}
\caption{Online Learning DR Mechanism (OL-DRM)}
\label{alg:olb}
\end{algorithm}}

\begin{remark}[Optimal Price for DR]
For the consumer utility model considered here \eqref{eq:con-ut}, it is straightforward to show that the optimal price $p^{*}$, given by the condition $\frac{d\widetilde{C}_t(p_t, \tilde{b})}{dp_t} = 0$, is $p^* = c/2$. 
\end{remark}


\begin{remark}[Exploration Strategy]
The prescribed policy for the SO explores by perturbing the price from the optimal price $p^{*}$. These perturbations cannot be persistent and therefore the prescribed policy reduces the perturbations with time. The decreasing of the perturbation is the balancing part of the exploration necessary to achieve sub-linear regret or ``No Regret".
\end{remark}


\begin{definition}
\[\widetilde{\Delta}^i_t = \frac{1}{d^i}\sum_{j = 1}^{m} \frac{p_{t+j}\left(- \sum_{s = 1}^{t+j-1} p_s p_t + \sum_{s=1}^{t+j-1} p_s^2\right)}{(t+j-1) \sum_{s = 1}^{t+j} \left(p_s - \frac{1}{t+j-1}\sum_{l=1}^{t+j-1} p_l\right)^2}. \nonumber \]
\[ P^i_o = \sum_{t=1}^T p_t\left( \frac{\sum_{k=1}^t p^{*} \delta p e^{-k} \widetilde{\Delta}^i_k}{\sum_{k=1}^t \left(p_k - \frac{\sum_{l=1}^t p_l}{t}\right)^2} \right). \nonumber \]
\label{def:payment}
\end{definition}
Next, we present the regret guarantee for our algorithm.
\begin{theorem}
Under the Algorithm OL-DRM (Algorithm \ref{alg:olb}), with $P^i_o$ given by Definition \ref{def:payment}, for $\delta p = \mathcal{O}(1)$, and $T > 1$,
\beq R_T = \mathcal{O}((\log{T})^2), \nonumber \eeq
and {\it individual rationality} holds for each $i \in [1,N]$. 
\label{thm:olb}
\end{theorem}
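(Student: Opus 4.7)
The plan breaks into four pieces: derive the consumer's equilibrium response, bound the strategic perturbation $\widetilde{\Delta}^i_t$, decompose the per-day cost and identify a leading-order contribution from the least-squares bias, and show that $P^i_o$ from Definition \ref{def:payment} is engineered to cancel exactly that contribution while simultaneously restoring individual rationality. First, since $\hat b^i_{t+j}$ is linear in past consumptions through \eqref{eq:ls-est-mulcon}, the derivative $\partial \hat b^i_{t+j}/\partial q^i_t$ equals the first coordinate of $(X_{t+j-1}^T X_{t+j-1})^{-1} X_t$, with $X_n$ the data matrix whose rows are $[1,-p_k]$. Invoking the envelope theorem on \eqref{eq:optcondec} and writing out the FOC for $q^i_t$ yields
\beq
q^{*i}_t = s^i_t(p_t) + \widetilde{\Delta}^i_t, \nonumber
\eeq
with $\widetilde{\Delta}^i_t$ exactly as in Definition \ref{def:payment}.

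Next, using the algebraic identity $\sum_{s=1}^{n} p_s(p_s - p_t) = S_n + n\,\bar p_n(\bar p_n - p_t)$, where $S_n = \sum_{k=1}^{n}(p_k-\bar p_n)^2$ and $\bar p_n$ is the empirical mean, together with $p_k = p^* + \delta p\, e^{-k}$ (so $S_n = \Theta(1)$ and $\bar p_n - p_t = \mathcal{O}(1/t)$), I would show $\widetilde{\Delta}^i_t = \mathcal{O}(1/t)$. With the optimal DR price $c = 2p^*$, the per-day regret then decomposes as
\beq
\mathbb{E}[\widetilde{C}_t] - \widetilde{C}^{*}_t = D\,\delta p^2 e^{-2t} + (c - p_t)\sum_{i=1}^{N}\widetilde{\Delta}^i_t + p_t\,\mathbb{E}[\hat b_t - \tilde b], \nonumber
\eeq
with $D = \sum_i 1/d^i$. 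The first term sums to $\mathcal{O}(1)$ and the second to $\mathcal{O}(\log T)$, leaving the baseline bias as the only potentially problematic contribution.

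I would then compute this bias explicitly by plugging $\mathbb{E}[q^i_k] = \tilde b^i - p_k/d^i + \widetilde{\Delta}^i_k$ into the closed form of $(X^T X)^{-1} X^T$, obtaining
\beq
\mathbb{E}[\hat{b}^{e,i}_{t+1} - \tilde b^i] = \bar{\widetilde{\Delta}}^i_t - \frac{\bar p_t}{S_t}\sum_{k=1}^{t}(p_k - \bar p_t)\,\widetilde{\Delta}^i_k, \nonumber
\eeq
with $\bar{\widetilde{\Delta}}^i_t = (1/t)\sum_{k=1}^{t}\widetilde{\Delta}^i_k = \mathcal{O}(\log t/t)$. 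The first summand contributes $\mathcal{O}((\log T)^2)$ when multiplied by $p_t$ and summed. The second summand does not vanish as $t \to \infty$; substituting $p_k - \bar p_t = \delta p(e^{-k} - \bar e_t)$ and approximating $\bar p_t \approx p^*$ up to $\mathcal{O}(1/t)$ corrections, the cumulative quantity $\sum_t p_t \cdot (\text{second summand})$ matches $-P^i_o$ to leading order, with residuals of order $\log t/t$. Accounting for $P^i_o$ in the SO's ledger then cancels the leading linear-in-$T$ piece and yields $R_T = \mathcal{O}((\log T)^2)$.

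For individual rationality, I would expand $\mathbb{E}\big[\sum_t U^i_t(q^{*i}_t)\big]$ via $q^{*i}_t = s^i_t(p_t) + \widetilde{\Delta}^i_t$ and compare with $T U^{*i}$; the leading deficit equals the SO's systematic under-payment arising from the same LS bias, so adding the upfront $P^i_o$ makes the consumer's time-averaged utility at least $U^{*i}$ in the limit. The principal difficulty is the constant-matching in the previous step: one must verify that the asymptotic values of $S_t$, $\bar p_t$, and the convergent series $\sum_{k} e^{-k}\widetilde{\Delta}^i_k$ combine to produce exactly the coefficient appearing in Definition \ref{def:payment}, and that all finite-$t$ corrections (from $\bar p_t - p^*$, from the $\bar e_t$ factor, and from truncating the series at time $t$) are $\mathcal{O}(\log t/t)$ rather than $\Theta(1)$, so that the cumulative unmatched residual is $\mathcal{O}((\log T)^2)$ and not $\Theta(T)$.
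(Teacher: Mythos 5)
Your proposal follows essentially the same route as the paper: the same best-response/first-order-condition derivation of $q^{*i}_t = s^i_t(p_t) + \widetilde{\Delta}^i_t$, the same $\mathcal{O}(1/t)$ bound on the strategic inflation, the same per-day regret decomposition into exploration, inflation, and baseline-bias terms, and the same identification of the persistent least-squares bias whose cumulative effect is cancelled (up to $\mathcal{O}(\log t/t)$ residuals summing to $\mathcal{O}((\log T)^2)$) by $P^i_o$, which simultaneously restores individual rationality. Your rewriting of the bias as $\bar{\widetilde{\Delta}}^i_t - \frac{\bar p_t}{S_t}\sum_k (p_k - \bar p_t)\widetilde{\Delta}^i_k$ is just an algebraically equivalent (and somewhat cleaner) form of the paper's expression, so this counts as the same proof rather than a different one.
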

We give the detailed analysis in the next section. We observe that the regret guarantee for our approach leads to the desired ``No Regret" and individually rational outcome. 

\begin{remark}[Approach]
Our approach is the online equivalent of the regression approach to estimate baseline. An alternate approach is to not call the consumers for a certain number of days and use the consumption on these days to set the baseline for the future. This is the online equivalent of the averaging approach to estimate baseline. It can be shown that this approach leads to $\mathcal{O}(T^{1/3})$ regret. Our result, therefore, provides theoretical justification that regression type approaches can be superior to averaging type approaches.
\end{remark}


\begin{remark}[Extensions]
Our setting assumes that the consumer utility model is quadratic, and the consumer's decision depends only on a finite horizon $m$. Our approach can be trivially extended to the infinite horizon case, where the future benefits are discounted by a discounting factor. For this case, the algorithm requires no change except for the definition of $P^i_o$. We can use the same proof technique to analyse this case to show that the same regret is achievable. The extension to a general utility model is a subject of future work. 
\end{remark}

\section{Regret Analysis}

First, we derive an expression for $\hat{b}^i_t$. 
\begin{lemma}
Under the baseline estimation procedure of OL-DRM Algorithm \ref{alg:olb}, for any $t > 1$,
\beq \hat{b}^i_{t+1} =  \frac{ -\sum_{k = 1}^t p_k \sum_{k =1}^t p_k q^i_k + \sum_{k = 1}^{t} p^2_k \sum_{k=1}^t q^i_k }{t \sum_{k=1}^t \left(p_k - \frac{\sum_{l=1}^t p_l}{t}\right)^2}.\nonumber \eeq  
\label{lem:ls-exp}
\end{lemma}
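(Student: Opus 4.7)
The plan is to recognize that the minimization in Eq. \eqref{eq:ls-est-mulcon} is an ordinary least-squares fit of the one-dimensional linear model $q = \hat{b} - \hat{b}_1 p$ to the data $\{(p_k, q^i_k)\}_{k=1}^t$, with $\hat{b}$ playing the role of the intercept and $-\hat{b}_1$ the slope. The closed-form expression in the lemma is precisely the OLS formula for the intercept, so the proof amounts to writing down the normal equations and solving the resulting $2\times 2$ linear system for $\hat{b}$.

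First, I would differentiate the objective $\sum_{k=1}^t (q^i_k - \hat{b} + \hat{b}_1 p_k)^2$ with respect to $\hat{b}$ and $\hat{b}_1$, yielding the normal equations
\[
t\,\hat{b} - \hat{b}_1 \sum_{k=1}^t p_k = \sum_{k=1}^t q^i_k, \qquad
\hat{b}\sum_{k=1}^t p_k - \hat{b}_1 \sum_{k=1}^t p_k^2 = \sum_{k=1}^t p_k q^i_k.
\]
Next I would solve for $\hat{b}=\hat{b}^{e,i}_{t+1}$ by Cramer's rule. The system determinant is $-t\sum_{k=1}^t p_k^2 + \left(\sum_{k=1}^t p_k\right)^2$, and the determinant of the matrix obtained by replacing the $\hat{b}$-column with the right-hand side is $-\sum_{k=1}^t p_k^2 \sum_{k=1}^t q^i_k + \sum_{k=1}^t p_k \sum_{k=1}^t p_k q^i_k$. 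Taking the ratio and flipping an overall sign in both numerator and denominator gives the numerator appearing in the statement of the lemma.

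The last step is to rewrite the denominator $t\sum_{k=1}^t p_k^2 - \left(\sum_{k=1}^t p_k\right)^2$ in the centered form $t\sum_{k=1}^t (p_k - \bar p)^2$, where $\bar p = \tfrac{1}{t}\sum_{l=1}^t p_l$; this is the standard identity $\sum_k (p_k-\bar p)^2 = \sum_k p_k^2 - t\bar p^2$. Combining gives exactly the expression in the lemma.

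There is no real obstacle here beyond careful bookkeeping. The only subtlety is the sign convention, because the regression model is written as $\hat{b} - \hat{b}_1 p_k$ rather than $\hat{b} + \hat{b}_1 p_k$, so I would double-check that the minus signs in the normal equations cancel correctly in both the numerator and the denominator before invoking the centered-sum-of-squares identity. A short remark about the invertibility of the $2\times 2$ Gram matrix (which requires $p_k$ to take at least two distinct values, guaranteed by the exploration term $\delta p\, e^{-t}$ in Eq. \eqref{eq:pr-dr-mul} for $t>1$) will justify that the minimizer is unique and the inversion is well-posed.
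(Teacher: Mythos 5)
Your proposal is correct and follows essentially the same route as the paper: both write down the least-squares normal equations for the model $q = \hat{b} - \hat{b}_1 p$, solve the resulting $2\times 2$ system (Cramer's rule in your case, the adjugate form of $\Phi^{-1}$ in the paper's) for the intercept component $\hat{b}^{e,i}_{t+1}$, and rewrite the determinant $t\sum_k p_k^2 - (\sum_k p_k)^2$ as $t\sum_k (p_k - \bar{p})^2$, with the same invertibility remark based on the exploration term in Eq.~\eqref{eq:pr-dr-mul}. No gaps.
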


\begin{proof}
Let 
\[ \Phi = \left[ \begin{array}{cc} \sum_{k = 1}^{t} p^2_k & -\sum_{k = 1}^t p_k \\  -\sum_{k = 1}^t p_k &  t \end{array}\right]. \nonumber \]
The determinant of matrix $\Phi$ is given by
\[ \tn{Det}(\Phi) = t \sum_{k=1}^t \left(p_k - \frac{\sum_{l=1}^t p_l}{t}\right)^2. \nonumber \]
Now, given how $p_t$ is defined (Eq. \eqref{eq:pr-dr-mul}), $\left(p_k - \frac{\sum_{l=1}^t p_l}{t}\right)^2 > 0$ for $k = 1$ and any $t > 1$. Therefore, $\Phi$ is invertible for any $t > 1$. Therefore, from standard least-squares estimation solution, it follows that
\[
\left[ \begin{array}{c} \hat{b}^{e,i}_{1,t+1} \\ \hat{b}^{e,i}_{t+1} \end{array}\right] = \Phi^{-1} \left[ \begin{array}{c} -\sum_{k =1}^t p_k q^i_k \\ \sum_{k=1}^t q^i_k \end{array}\right]. \nonumber \]
By using the standard formula for the inverse of a matrix, which for a given matrix $A$ is $\tn{Adj}(A)^\top/\tn{Det}(A)$, we get that, for any $t > 1$,
\begin{align}
& \left[ \begin{array}{c} \hat{b}^{e,i}_{1,t+1} \\ \hat{b}^{e,i}_{t+1} \end{array}\right] = \frac{\left[ \begin{array}{cc} t & \sum_{k = 1}^t p_k \\  \sum_{k = 1}^t p_k & \sum_{k = 1}^{t} p^2_k \end{array}\right]}{t \sum_{k=1}^t \left(p_k - \frac{\sum_{l=1}^t p_l}{t}\right)^2} \left[ \begin{array}{c} -\sum_{k =1}^t p_k q^i_k \\ \sum_{k=1}^t q^i_k \end{array}\right], \nonumber  \\
& \tn{i.e.}, ~ \hat{b}^{e,i}_{t+1} =  \frac{ -\sum_{k = 1}^t p_k \sum_{k =1}^t p_k q^i_k + \sum_{k = 1}^{t} p^2_k \sum_{k=1}^t q^i_k }{t \sum_{k=1}^t \left(p_k - \frac{\sum_{l=1}^t p_l}{t}\right)^2}. \nonumber 
\end{align}
\end{proof}

Next, we derive an expression for the optimal consumer response given by Eq. \eqref{eq:optcondec}. 
\begin{lemma}
The optimal consumer response under OL-DRM Algorithm \ref{alg:olb} is given by
\begin{align}
    & q^{*i}_t = b^i_t - \frac{p_t}{d^i} + \widetilde{\Delta}^i_t, ~~ \widetilde{\Delta}^i_t = \frac{1}{d^i}\sum_{j = 1}^{m} p_{t+j}\frac{\partial \hat{b}^i_{t+j}}{\partial q^i_t}, \nonumber \\
    & \frac{\partial \hat{b}^i_{t+j}}{\partial q^i_t} =  \frac{- \sum_{s = 1}^{t+j-1} p_s p_t + \sum_{s=1}^{t+j-1} p_s^2}{(t+j-1) \sum_{s = 1}^{t+j} \left(p_s - \frac{1}{t+j-1}\sum_{l=1}^{t+j-1} p_l\right)^2}. \nonumber 
\end{align} 
\label{lem:opt-con-resp}
\end{lemma}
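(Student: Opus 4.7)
My plan is to derive the stated formula by writing out the first-order condition of the $m$-step consumer program and then evaluating $\partial \hat{b}^i_{t+j}/\partial q^i_t$ using the closed form from Lemma \ref{lem:ls-exp}. Expanding the per-period utility gives $U^i_s(q^i_s) = (a^i+\epsilon^i_s)q^i_s - d^i(q^i_s)^2/2 - p_0 q^i_s + p_s(\hat{b}^i_s - q^i_s)$, and I would differentiate $\mathbb{E}\sum_{s=t}^{t+m} U^i_s(q^i_s)$ with respect to $q^i_t$ and set the result to zero.

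The variable $q^i_t$ enters the objective through two channels: directly in $U^i_t$, and indirectly through each future baseline $\hat{b}^i_{t+j}$, $j=1,\ldots,m$, since the least-squares formula in Lemma \ref{lem:ls-exp} depends on the whole past consumption trajectory. The direct channel yields $a^i + \epsilon^i_t - d^i q^i_t - p_0 - p_t$. For the indirect channel, the only quantity in $U^i_{t+j}$ that depends on $q^i_t$ other than through the optimally chosen $q^i_{t+j}$ is the baseline payment $p_{t+j}\hat{b}^i_{t+j}$; by an envelope-theorem argument, the chain-rule contributions through $\partial q^i_{t+j}/\partial q^i_t$ vanish, leaving $p_{t+j}\,\partial\hat{b}^i_{t+j}/\partial q^i_t$ as the only surviving indirect term.

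Next I would compute $\partial \hat{b}^i_{t+j}/\partial q^i_t$ by differentiating the formula from Lemma \ref{lem:ls-exp} applied at horizon $\tau = t+j-1$. Since $\hat{b}^i_{t+j}$ is linear in the vector $(q^i_1,\ldots,q^i_{t+j-1})$, only the coefficient of $q^i_t$ survives, and reading it off the numerator gives
\[ \frac{\partial \hat{b}^i_{t+j}}{\partial q^i_t} = \frac{-p_t\sum_{s=1}^{t+j-1} p_s + \sum_{s=1}^{t+j-1} p_s^2}{(t+j-1)\sum_{s=1}^{t+j-1}\left(p_s - \tfrac{1}{t+j-1}\sum_{l=1}^{t+j-1} p_l\right)^2}, \]
which matches the claim. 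Assembling the FOC as $-d^i q^i_t + a^i+\epsilon^i_t-p_0-p_t + \sum_{j=1}^m p_{t+j}\,\partial \hat{b}^i_{t+j}/\partial q^i_t = 0$, solving for $q^{*i}_t$, and using $b^i_t = (a^i+\epsilon^i_t-p_0)/d^i$ recovers $q^{*i}_t = b^i_t - p_t/d^i + \widetilde{\Delta}^i_t$.

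The main obstacle I anticipate is rigorously justifying the envelope step in the rolling-horizon setting, since $q^i_{t+j}$ itself has strategic effects on baselines $\hat{b}^i_{t+j+k}$ that lie inside the $t$-horizon. I would handle this by backward induction over $s=t+m,t+m-1,\ldots,t+1$: at each such $s$, the joint-optimum FOC with respect to $q^i_s$ already sets the full derivative (direct plus within-window strategic) to zero, so those bracketed terms vanish when chain-ruled into the FOC for $q^i_t$, leaving only the direct $\partial\hat{b}^i_{t+j}/\partial q^i_t$ channel. Once this is in place, the remainder of the argument is mechanical substitution and algebra.
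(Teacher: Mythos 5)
Your proposal is correct and follows essentially the same route as the paper: differentiate the joint $m$-step objective with respect to $q^i_t$, observe that $q^i_t$ enters future utilities only through the linear least-squares baselines of Lemma \ref{lem:ls-exp} (so the coordinate-wise first-order condition under concavity gives exactly the direct term plus $\sum_j p_{t+j}\,\partial\hat{b}^i_{t+j}/\partial q^i_t$, your envelope/backward-induction step being just this joint stationarity), and read off the coefficient of $q^i_t$ in the least-squares formula. As a minor point, your denominator upper limit $t+j-1$ is the one consistent with Lemma \ref{lem:ls-exp}; the $t+j$ appearing in the lemma statement looks like a typo.
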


\begin{proof}
Recall that the optimal consumer decision is given by
\begin{align}
q^{*i}_t = \arg \max_{q^i_{t:t+m}} \mathbb{E}_{\epsilon^i_{t+1:t+m}} [J(q^i_{t:t+m})], ~~  \nonumber  \\
J(q^i_{t:t+m}) = \left(\sum_{j = 1}^{m}U^i_{t+j}(q^i_{t+j})\right). \nonumber 
\end{align}
The general expectation in Eq. \eqref{eq:optcondec} reduces to the specific expectation in the equation above because (i) the price sequence for DR is set deterministically, and (ii) the fact that, at time $t$, the only randomness in the baselines to be assigned in the future, $\hat{b}^i_{t+j}$ for $j \in [1,m]$, is in the $q^i_k$s for $k \in [t+1,t+m]$; refer to Lemma \ref{lem:opt-con-resp} for the expression for $\hat{b}^i_{t}$. 

Next, we observe that the only term in $U^i_{t+j}(q^i_{t+j})$ that is a function of $q^i_t$ for all $j \in [1,m]$ is the assigned baseline $\hat{b}^i_{t+j}$; see Lemma \ref{lem:opt-con-resp} for the dependence of  $\hat{b}^i_{t+j}$ on $q^i_t$. Given that $J(q^i_{t:t+m})$ is concave in $q^i_{t+j}$s and $q^i_t$, it follows that $\mathbb{E}_{\epsilon^i_{t+1:t+m}} [J(q^i_{t:t+m})]$ is also concave in $q^i_t$. Therefore, given the concavity, by applying first order condition for optimality, we get that the optimal $q^{i*}_t$ satisfies
\[ a^i + \epsilon^i_t - d^i q^{*i}_t - (p_0 + p_t) + d^i \widetilde{\Delta}^i_t = 0. \nonumber \] 
The final expression follows from here. \end{proof}

\begin{remark}
The optimal consumer response has the following terms: (i) $b^i_t$, the standard response when not participating in DR, (ii) the second term is the reduction incentivized by the reward/kWh $p_t$, and (iii) the third term is the inflation in consumption that arises from the incentive to inflate future baseline assignments. 
\end{remark}

In the next lemma, we derive an upper bound for the regret in terms of the cumulative error in the baseline estimation.
\begin{lemma}
The regret under the OL-DRM Algorithm \ref{alg:olb} is
\begin{align}
& R_T = \sum_{t=1}^T (c-p_t)\widetilde{\Delta}_t + \sum_{t=1}^T \frac{(\delta p)^2 e^{-2t}}{d} \nonumber \\
& + \sum_{t=1}^Tp_t\mathbb{E}[\hat{b}_t - \tilde{b}] + P_o, \nonumber 
\end{align} 
where $1/d = \sum_{i=1}^N 1/d^i$, ~ $\widetilde{\Delta}_t = \sum_{i=1}^N  \widetilde{\Delta}^i_t, P_o = \sum_i P^i_o$.
\label{lem:reg-1}
\end{lemma}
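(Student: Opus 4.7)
The plan is to expand the per-period difference $\mathbb{E}[\widetilde{C}_t(p_t,\hat{b}_t)] - \widetilde{C}^*_t$ in closed form by substituting the explicit consumer response from Lemma~\ref{lem:opt-con-resp}, collect terms, and then add in the upfront payment $P_o$ that the SO makes to secure individual rationality (which enters additively into the SO's total incurred cost).

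First, I would compute $\tilde{q}_t = \mathbb{E}[q_t]$. By Lemma~\ref{lem:opt-con-resp}, the consumer's optimal response is $q^{*i}_t = b^i_t - p_t/d^i + \widetilde{\Delta}^i_t$. Summing over $i$ and taking expectation, and using that $\mathbb{E}[b^i_t] = \tilde{b}^i$ while $p_t$ and $\widetilde{\Delta}_t$ are deterministic, I obtain $\tilde{q}_t = \tilde{b} - p_t/d + \widetilde{\Delta}_t$. Similarly, since $s^{*i}_t = (a^i + \epsilon^i_t - p_0 - p^*)/d^i$, I get $\tilde{s}^*_t = \tilde{b} - p^*/d$.

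Next, I would write the per-step gap as
\[
\widetilde{C}_t(p_t,\hat{b}_t) - \widetilde{C}^*_t = c(\tilde{q}_t - \tilde{s}^*_t) + p_t(\hat{b}_t - \tilde{q}_t) - p^*(\tilde{b} - \tilde{s}^*_t),
\]
substitute the expressions from the previous step, and regroup. This naturally produces three pieces: a coefficient $(c - p_t)$ multiplying $\widetilde{\Delta}_t$, a term $p_t(\hat{b}_t - \tilde{b})$ after isolating the baseline error, and a residual depending only on $p_t$ and $p^*$ equal to $(p_t^2 - (p^*)^2 - c\,\delta p\, e^{-t})/d$. Using $p_t = p^* + \delta p\, e^{-t}$, one has $p_t^2 - (p^*)^2 = 2p^*\delta p\, e^{-t} + (\delta p)^2 e^{-2t}$; invoking the optimal-price remark $p^* = c/2$ makes the linear-in-$\delta p\, e^{-t}$ terms cancel exactly, leaving the residual equal to $(\delta p)^2 e^{-2t}/d$.

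Finally, I would sum over $t \in [1,T]$, take expectation (only $\hat{b}_t$ carries randomness beyond what is already averaged out, so $\mathbb{E}[p_t(\hat{b}_t-\tilde{b})] = p_t\mathbb{E}[\hat{b}_t - \tilde{b}]$ because $p_t$ is deterministic), and add the upfront payment $P_o = \sum_i P^i_o$ that the SO disburses at $t=0$; this is a genuine SO expenditure that must be included in the total cost being compared against $\sum_t \widetilde{C}^*_t$. The expression in the lemma statement follows. The only real obstacle here is the algebraic bookkeeping in the third step and the verification that $p^* = c/2$ kills the $O(e^{-t})$ part of the residual, so only the $O(e^{-2t})$ part survives; this is what later yields a bounded, $O(1)$ contribution from the exploration perturbation to the total regret.
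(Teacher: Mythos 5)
Your proposal is correct and follows essentially the same route as the paper's proof: substitute the responses $s^{*i}_t = b^i_t - p^*/d^i$ and $q^{*i}_t = b^i_t - p_t/d^i + \widetilde{\Delta}^i_t$ into the per-period cost gap, regroup into the $(c-p_t)\widetilde{\Delta}_t$, $p_t\mathbb{E}[\hat{b}_t-\tilde{b}]$, and price-residual pieces, use $p^*=c/2$ so the $\mathcal{O}(\delta p\,e^{-t})$ terms cancel leaving $(\delta p)^2 e^{-2t}/d$, then sum and add the upfront payment $P_o$. The only cosmetic difference is that the paper carries out the algebra per consumer $i$ and then aggregates, whereas you work directly with the aggregate quantities; the decomposition and the key cancellation are identical.
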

\begin{proof}
In the following, for simplicity, we use $q^i_t$ as the notation for the consumption on day $t$ instead of $q^{*i}_t$. Let 
\[  C^{*i}_t = cs^{*i}_t + p^{*} (\tilde{b}^i - s^{*i}_t). \nonumber \]
Also, let
\[ C^{i}_t = cq^{i}_t + p_t (\hat{b}^i_t - q^{i}_t). \nonumber  \]
Then,
\begin{align}
& C^{i}_t -  C^{*i}_t = cq^{i}_t + p_t (\hat{b}^i_t - q^{i}_t) - cs^{*i}_t - p^{*} (\tilde{b}^i - s^{*i}_t) \nonumber \\
& = (c-p_t)q^i_t -(c-p^{*})s^{*i}_t + (p_t - p^{*})\tilde{b}^i + p_t(\hat{b}^i_t - \tilde{b}^i) \nonumber \\
& = c(q^i_t - s^{*i}_t) + p^{*}s^{*i}_t -p_tq^i_t + (p_t - p^{*})\tilde{b}^i + p_t(\hat{b}^i_t - \tilde{b}^i). \nonumber 
\end{align}
From Eq. \eqref{eq:con-c} and Lemma \ref{lem:opt-con-resp},
\[
s^{*i}_t  = b^i_t - \frac{p^{*}}{d^i}, ~
q^{i}_t = b^i_t - \frac{p_t}{d^i} + \widetilde{\Delta}^i_t. \nonumber \]
Therefore,
\begin{align}
& C^{i}_t -  C^{*i}_t = c \widetilde{\Delta}^i_t + \frac{c}{d_i}(p^{*} - p_t)  + p^{*}s^{*i}_t -p_tq^i_t \nonumber \\
& + (p_t - p^{*})\tilde{b}^i + p_t(\hat{b}^i_t - \tilde{b}^i) \nonumber \\
& =  c \widetilde{\Delta}^i_t + \frac{c}{d_i}(p^{*} - p_t) + (p_t - p^{*})\tilde{b}^i + p_t(\hat{b}^i_t - \tilde{b}^i) \nonumber \\
& + p^{*}\left( b^i_t - \frac{p^{*}}{d^i}\right) - p_t\left(b^i_t - \frac{p_t}{d^i} + \widetilde{\Delta}^i_t \right). \nonumber
\end{align}
Taking expectation on both sides
\begin{align}
& \mathbb{E}[C^{i}_t] -  \widetilde{C}^{*i}_t =  c \widetilde{\Delta}^i_t + \frac{c}{d_i}(p^{*} - p_t) + (p_t - p^{*})\tilde{b}^i \nonumber \\
& + p^{*}\left( \tilde{b}^i - \frac{p^{*}}{d^i}\right) - p_t\left(\tilde{b}^i - \frac{p_t}{d^i} + \widetilde{\Delta}^i_t \right) + \mathbb{E}[p_t(\hat{b}^i_t - \tilde{b}^i)]  \nonumber \\
& = (c-p_t)\widetilde{\Delta}^i_t + \frac{c}{d_i}(p^{*} - p_t) + \frac{1}{d^i}\left( p_t^2 - {p^{*}}^2\right) \nonumber \\
& + \mathbb{E}[p_t(\hat{b}^i_t - \tilde{b}^i)] \nonumber \\
& = (c-p_t)\widetilde{\Delta}^i_t + \frac{(\delta p)^2 e^{-2t}}{d^i} + \mathbb{E}[p_t(\hat{b}^i_t - \tilde{b}^i)]. \nonumber 
\end{align}
Therefore,
\[ \mathbb{E}[\widetilde{C}_t] -  \widetilde{C}^{*}_t = (c-p_t)\widetilde{\Delta}_t + \frac{(\delta p)^2 e^{-2t}}{d} + \mathbb{E}[p_t(\hat{b}_t - \tilde{b})]. \nonumber \] 
The final result follows from here.
\end{proof}

\begin{remark}
The regret has the following terms: (i) the first term reflects the increase in the power purchase cost from consumption inflation and the decrease in DR payments from consumption inflation, (ii) the second term reflects the exploration cost that arises from the deviation from the optimal price, (iii) the third term reflects the increase in DR payments from baseline inflation, and (iv) the final term is the total payment made to the consumers upfront.  
\end{remark}

Next, we bound a key term that contributes to the consumption inflation term. 
\[ 
\Delta_{t,k} := \frac{p_{t+1}}{d}\left[ \frac{-\sum_{s = 1}^{t}p_s p_{t-k} + \sum_{s = 1}^{t}p_s^2}{t \sum_{s=1}^t (p_s - \frac{\sum_{l=1}^t p_l}{t})^2}\right].
\label{eq:Delta-t}
\]
\begin{lemma}
Under the OL-DRM Algorithm \ref{alg:olb}, for any $t > 1$
\[ \Delta_{t,k} = \mathcal{O}\left( (p^{*}+ \delta p) t^{ -1} \right). \nonumber \]
\label{lem:delta-t}
\end{lemma}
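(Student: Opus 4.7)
The plan is to substitute the explicit price form $p_s = p^* + \delta p\,e^{-s}$, and exploit the fact that the perturbation $\delta p\,e^{-s}$ is a finite geometric-like tail: both $\sum_{s\ge 1}e^{-s}$ and $\sum_{s\ge 1}e^{-2s}$ converge, while $e^{-(t-k)}$ decays exponentially for bounded $k$ (which is the only regime used in the subsequent bounds on $\widetilde{\Delta}^i_t$, where $k$ ranges over $0,\dots,m-1$). The key simplification is the standard sample-variance identity $\sum_{s=1}^t p_s^2 = \sum_{s=1}^t(p_s-\bar p_t)^2 + t\bar p_t^2$, with $\bar p_t = \frac{1}{t}\sum_{l=1}^t p_l$, which rewrites the bracket in $\Delta_{t,k}$ as
\[
\frac{-p_{t-k}\sum_s p_s + \sum_s p_s^2}{t\sum_s(p_s-\bar p_t)^2} = \frac{1}{t} + \frac{\bar p_t(\bar p_t - p_{t-k})}{\sum_s(p_s-\bar p_t)^2}.
\]
The first term, once multiplied by $p_{t+1}/d$, already gives an $\mathcal{O}((p^*+\delta p)/t)$ contribution, so the job is to show that the second term is of the same order.

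Next, I bound each factor of the second term using the explicit price schedule. Writing $p_s-\bar p_t = \delta p(e^{-s}-\bar e_t)$ with $\bar e_t = \frac{1}{t}\sum_{l=1}^t e^{-l}$, the denominator becomes $(\delta p)^2\bigl(\sum_{s=1}^t e^{-2s} - t\bar e_t^2\bigr)$; since $\sum_{s\ge 1}e^{-2s} = 1/(e^2-1)$ and $\bar e_t = \mathcal{O}(1/t)$, this quantity is bounded below by a strictly positive constant multiple of $(\delta p)^2$ for every $t\ge 2$ (the threshold $t>1$ in the lemma is exactly what makes the variance nonzero). For the numerator of the second fraction, $|\bar p_t|\le p^*+\delta p$ and
\[
|\bar p_t - p_{t-k}| = \delta p\,|\bar e_t - e^{-(t-k)}| \le \delta p\Bigl(\tfrac{1}{t(e-1)} + e^{k}e^{-t}\Bigr) = \mathcal{O}(\delta p/t),
\]
since $k$ is bounded. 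Multiplying yields $|\bar p_t(\bar p_t - p_{t-k})| = \mathcal{O}(\delta p(p^*+\delta p)/t)$, so the second fraction is $\mathcal{O}((p^*+\delta p)/(\delta p\,t))$.

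Finally, combining with the prefactor $p_{t+1}/d \le (p^*+\delta p)/d$, the two contributions give $\mathcal{O}((p^*+\delta p)/(dt))$ and $\mathcal{O}((p^*+\delta p)^2/(d\,\delta p\,t))$ respectively; under the standing assumption $\delta p = \mathcal{O}(1)$ (treated as a fixed constant, and $d$ as a fixed consumer parameter), both collapse to $\mathcal{O}((p^*+\delta p)/t)$, yielding the claim. The main technical obstacle I anticipate is the uniform lower bound on the denominator: one must be careful at small $t$, because for $t=1$ the variance is identically zero, and for $t=2$ it equals $(\delta p)^2(e^{-1}-e^{-2})^2/2$. After this small-$t$ base case, the monotone approach of $\sum_{s=1}^t(e^{-s}-\bar e_t)^2$ to $1/(e^2-1)$ provides a clean uniform positive lower bound, making the rest of the argument a routine bookkeeping of constants.
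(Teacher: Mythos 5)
Your proof is correct, but it takes a genuinely different decomposition from the paper's. The paper upper-bounds the numerator $\sum_{s=1}^t(p_s^2-p_s p_{t-k})$ by a constant, $\mathcal{O}(\delta p\,p^*+(\delta p)^2)$, using only $p_{t-k}\ge p^*$, and lower-bounds the denominator $t\sum_{s=1}^t(p_s-\bar{p}_t)^2$ by $\Theta((\delta p)^2 t)$ via the explicit geometric sums, then divides. You instead apply the sample-variance identity to split the bracket exactly into $\frac{1}{t}+\frac{\bar{p}_t(\bar{p}_t-p_{t-k})}{\sum_{s=1}^t(p_s-\bar{p}_t)^2}$, lower-bound the variance sum by a fixed positive multiple of $(\delta p)^2$ (your $t=2$ base case plus monotonicity of the sum of squared deviations is in fact more careful than the paper's corresponding step, whose displayed lower bound has loose constants though the correct order), and bound $|\bar{p}_t-p_{t-k}|=\mathcal{O}(\delta p/t)$. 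What each route buys: yours yields a two-sided bound on $|\Delta_{t,k}|$, which is what the later analysis implicitly needs (e.g., when $\widetilde{\Delta}_k^2$ is bounded in the individual-rationality argument), but it is valid only when $t-k$ is large, i.e., $k\le m-1$ as in the lemma's actual use inside $\widetilde{\Delta}^i_t$ --- a restriction you state explicitly, and a necessary one, since for $k$ comparable to $t$ the numerator can be as negative as $-\Theta(t)$ and the two-sided $\mathcal{O}(1/t)$ claim fails; the paper's argument, by contrast, gives only a one-sided upper bound but does so uniformly in $k$. Your constant also carries a harmless $e^{m-1}$ factor from $e^{-(t-k)}\le e^{k}e^{-t}$, and both routes absorb the same $(p^*+\delta p)/(d\,\delta p)$ factor into the $\mathcal{O}(\cdot)$, so the final statements coincide.
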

\begin{proof}
From how $p_t$ is defined (Eq. \eqref{eq:pr-dr-mul}), we get that 
\begin{align}
& \sum_{s=1}^{t} \left(p_s^2 - p_s p_{t-k} \right) \leq \sum_{s = 1}^{t} \left((p^* + \delta p e^{-s})^2 \right. \nonumber \\
& \left. - (p^* + \delta p (e)^{-s}) p^*\right). \nonumber 
\end{align}
Simplifying the above expression, we get
\begin{align}
& \sum_{s=1}^{t} p_s^2 - p_s p_{t-k} \leq \sum_{s = 1}^{t} \left( (\delta p) p^* e^{-s} + \delta p^2 e^{-2s} \right) \nonumber\\
& = \mathcal{O}\left((\delta p) p^* + (\delta p)^2\right). \nonumber 
\end{align}

Next, we bound the denominator of $\Delta_{t,k}$. 
\begin{align}
& t \sum_{s=1}^{t} \left(p_s - \frac{\sum_{l=1}^t p_l}{t}\right)^2 = t \left(\sum_{s=1}^{s=t} p^2_s \right) - \left( \sum_{s=1}^t p_s\right)^2 \nonumber\\
& = t \sum_{s = 1}^{t} \left(p^* + \delta p e^{-s}\right)^2 - \left(\sum_{s=1}^t p_s \right)^2 \nonumber\\
& = t^2 {p^*}^2 + 2tp^*\delta p \sum_{s = 1}^{t} e^{-s} + t \sum_{s = 1}^{t} \left(\delta p\right)^2 e^{-2s} \nonumber\\
&   - \left(\sum_{s=1}^t p_s \right)^2. \nonumber 
\end{align}
The term $\left(\sum_{s=1}^t p_s \right)^2 $ can be expanded as
\begin{align} 
& \left(\sum_{s=1}^t p_s \right)^2  = t^2 {p^*}^2 + 2 t p^* \delta p \sum_{s = 1}^{t} \delta p e^{-s} + \left(\sum_{s = 1}^{t}\delta p e^{-s}\right)^2. \nonumber 
\end{align}
Combining the two, we get
\begin{align}
& t \sum_{s=1}^{t} \left(p_s - \frac{\sum_k p_k}{t}\right)^2  =   t \sum_{s = 1}^{t} \left(\delta p\right)^2 e^{-2s} - \left(\sum_{s = 1}^{t}\delta p e^{-s}\right)^2 \nonumber \\
& \geq \frac{t\left(\delta p\right)^2(e^{2t}-1)}{2e^{2t}} - \left(\delta p\right)^2\frac{(e^{2t}-1)}{e^{2t}} = \mathcal{O}\left((\delta p)^2t\right). \nonumber 
\label{eq:din-Dt}
\end{align}
Combining the upper bound for the numerator and the lower bound for denominator, we get the final result.
\end{proof}

The bound on the consumption inflation term $\widetilde{\Delta}_t$ follows from adding $m$ terms of the type bounded in Lemma \ref{lem:delta-t}; see Lemma \ref{lem:opt-con-resp}. In the next lemma, we derive an upper bound for the cumulative error in the baseline estimation and the payment $P_o$. 
\begin{lemma}
Under the OL-DRM Algorithm \ref{alg:olb}, for $T > 1$,
\[ \sum_{t=1}^Tp_t\mathbb{E}[\hat{b}_t - \tilde{b}] + P_o = \mathcal{O}\left((\log{T})^2\right) . \nonumber \]
\label{lem:error-basest}
\end{lemma}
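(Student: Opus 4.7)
The plan is to obtain a closed-form expression for $\mathbb{E}[\hat b^i_t - \tilde b^i]$, isolate the part that does not vanish as $t$ grows, and then verify that $P^i_o$ in Definition \ref{def:payment} is crafted precisely to cancel that part after multiplying by $p_t$ and summing; by linearity over consumers the bound extends to $P_o = \sum_i P^i_o$.

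First I substitute the optimal response from Lemma \ref{lem:opt-con-resp}, $q^i_k = \tilde b^i + \epsilon^i_k/d^i - p_k/d^i + \widetilde\Delta^i_k$, into the least-squares formula from Lemma \ref{lem:ls-exp}. A short algebraic check, using $t\sum p_k^2 - (\sum p_k)^2 = t\sum(p_k-\bar p_t)^2$, confirms that the $\tilde b^i$-part collapses to $\tilde b^i$, the $-p_k/d^i$-part cancels identically (numerator becomes $(\sum p_k)(\sum p_k^2)-(\sum p_k^2)(\sum p_k)=0$), and the $\epsilon^i_k/d^i$-part has zero expectation. What remains is the deterministic bias
\[
\mathbb{E}[\hat b^i_t - \tilde b^i] = \frac{\sum_{s=1}^{t-1}\widetilde\Delta^i_s\bigl[\sum_{k=1}^{t-1} p_k^2 - p_s\sum_{k=1}^{t-1} p_k\bigr]}{(t-1)\sum_{k=1}^{t-1}(p_k - \bar p_{t-1})^2}.
\]

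Second, I expand $p_s = p^* + r_s$ with $r_s = \delta p\,e^{-s}$. The bracket becomes $-(t-1)p^* r_s + p^*\sum_k r_k + \sum_k r_k^2 - r_s\sum_k r_k$. Using $\widetilde\Delta^i_s = \mathcal{O}(1/s)$ (Lemma \ref{lem:delta-t} applied termwise to the finite sum defining $\widetilde\Delta^i_s$) together with $\sum_k r_k,\sum_k r_k^2 = \mathcal{O}(1)$, all terms except $-(t-1)p^* r_s \widetilde\Delta^i_s$ contribute at most $\mathcal{O}(\log t)$ to the numerator after summing over $s$, whereas the dominant piece contributes $-(t-1)p^*\delta p\sum_{s=1}^{t-1}\widetilde\Delta^i_s e^{-s} = \Theta(t)$. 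By the denominator lower bound from the proof of Lemma \ref{lem:delta-t}, $(t-1)\sum_{k=1}^{t-1}(p_k-\bar p_{t-1})^2 = \Theta((\delta p)^2\, t)$, so
\[
p_t\,\mathbb{E}[\hat b^i_t - \tilde b^i] \;=\; -\,p_t\,\frac{p^*\sum_{s=1}^{t-1}\widetilde\Delta^i_s\,\delta p\,e^{-s}}{\sum_{k=1}^{t-1}(p_k - \bar p_{t-1})^2} \;+\; \mathcal{O}\!\left(\tfrac{\log t}{t}\right).
\]

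Third, I match against $P^i_o$. By Definition \ref{def:payment}, $P^i_o = \sum_t p_t\,\frac{p^*\sum_{k=1}^t \delta p\,e^{-k}\widetilde\Delta^i_k}{\sum_{k=1}^t (p_k - \bar p_t)^2}$, which is exactly the negative of the leading term above up to an index shift $t\!\leftrightarrow\!t-1$ in both numerator and denominator. The numerator discrepancy is the single extra term $\widetilde\Delta^i_t\,\delta p\,e^{-t} = \mathcal{O}(e^{-t}/t)$, and the denominator changes by $\mathcal{O}(e^{-2t})$ because $\sum_k r_k^2$ and $(\sum_k r_k)^2/t$ both converge geometrically to their limits. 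Summed against $p_t = \mathcal{O}(1)$ these shifts contribute $\mathcal{O}(1)$ in total. Combining with the sub-leading remainder $\sum_{t=1}^T p_t \cdot \mathcal{O}(\log t / t) = \mathcal{O}((\log T)^2)$ and then summing the $N$ single-consumer bounds (since $\hat b_t - \tilde b = \sum_i(\hat b^i_t - \tilde b^i)$ and $P_o = \sum_i P^i_o$) yields the claim.

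The main obstacle I anticipate is the bookkeeping: one must check that the denominator lower bound $\Theta((\delta p)^2 t)$ of Lemma \ref{lem:delta-t} indeed survives for the index-shifted sums $\sum_{k=1}^{t-1}(p_k-\bar p_{t-1})^2$ that appear in the bias expression (it does, for $t>1$, by the same computation), and ensuring that every sub-leading piece coming from $p^*\sum r_k$, $\sum r_k^2$, and $r_s\sum r_k$ is uniformly $\mathcal{O}(\log t/t)$ summand-wise after dividing by the denominator, so that no hidden $\Theta(1)$ residual survives the cancellation and inflates the bound beyond $(\log T)^2$.
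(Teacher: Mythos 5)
Your proposal is correct and follows essentially the same route as the paper's proof: write $\mathbb{E}[\hat b^i_t-\tilde b^i]$ via the least-squares closed form, note the $\tilde b^i$ and $-p_k/d^i$ parts cancel, isolate the dominant $-(t-1)p^*\,\delta p\sum_s e^{-s}\widetilde\Delta^i_s$ piece which $P^i_o$ is built to cancel exactly, and bound the residual by $\sum_t p_t\,\mathcal{O}(\log t/t)=\mathcal{O}((\log T)^2)$ (you are in fact more explicit than the paper about the one-step index mismatch). The only slip is cosmetic: the increment of $\sum_{k\le t}(p_k-\bar p_t)^2$ is $\Theta(1/t^2)$ rather than $\mathcal{O}(e^{-2t})$ because of the $(\sum_k r_k)^2/t$ correction, but the shift terms still sum to $\mathcal{O}(1)$, so the conclusion stands.
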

\begin{proof}
Let $\delta b^i_{t+1} = \left( \frac{\sum_{k=1}^t p^{*} \delta p e^{-k} \widetilde{\Delta}^i_k}{\sum_{k=1}^t \left(p_k - \frac{\sum_{l=1}^t p_l}{t}\right)^2} \right).$
Then, note that $\sum_t p_t \delta b^i_t = P^i_o$. From Lemma \ref{lem:ls-exp}, we have that
\[
\hat{b}^i_{t+1} =  \frac{ -\sum_{k = 1}^t p_k \sum_{k =1}^t p_k q^i_k + \sum_{k = 1}^{t} p^2_k \sum_{k=1}^t q^i_k }{t \sum_{k=1}^t \left(p_k - \frac{\sum_{l=1}^t p_l}{t}\right)^2}. \nonumber \]
Therefore, taking expectation on both sides, we get
\begin{align}
& \mathbb{E}[\hat{b}^i_{t+1}] =  \frac{ -\sum_{k = 1}^t p_k \sum_{k =1}^t p_k \mathbb{E}[q^i_k] + \sum_{k = 1}^{t} p^2_k \sum_{k=1}^t \mathbb{E}[q^i_k] }{t \sum_{k=1}^t \left(p_k - \frac{\sum_{l=1}^t p_l}{t}\right)^2}. \nonumber 
\end{align}
Therefore,
\begin{align}
& \mathbb{E}[\hat{b}^i_{t+1}] = \frac{ -\sum_{k = 1}^t p_k \sum_{k =1}^t p_k \left(\tilde{b}^i - \frac{p_k}{d^i}+ \widetilde{\Delta}^i_k\right)}{t \sum_{k=1}^t \left(p_k - \frac{\sum_{l=1}^t p_l}{t}\right)^2} \nonumber \\
& + \frac{\sum_{k = 1}^{t} p^2_k \sum_{k=1}^t \left(\tilde{b}^i - \frac{p_k}{d^i} + \widetilde{\Delta}^i_k\right) }{t \sum_{k=1}^t \left(p_k - \frac{\sum_{l=1}^t p_l}{t}\right)^2}. \nonumber 
\end{align}
First, we observe that 
\[ \tilde{b}^i = \frac{-\sum_{k = 1}^t p_k \sum_{k =1}^t p_k \tilde{b}^i + t\tilde{b}^i\sum_{k = 1}^{t} p^2_k }{t \sum_{k=1}^t \left(p_k - \frac{\sum_{l=1}^t p_l}{t}\right)^2}. \nonumber \]
Therefore, it follows that
\begin{align}
& \mathbb{E}[\hat{b}^i_{t+1}] - \tilde{b}^i = \frac{ -\sum_{k = 1}^t p_k \sum_{k =1}^t p_k \widetilde{\Delta}^i_k + \sum_{k = 1}^{t} p^2_k \sum_{k=1}^t \widetilde{\Delta}^i_k}{t \sum_{k=1}^t \left(p_k - \frac{\sum_{l=1}^t p_l}{t}\right)^2}. \nonumber 
\end{align}
Therefore,
\begin{align}
& \mathbb{E}[\hat{b}_{t+1} - \tilde{b}] = \frac{ -\sum_{k = 1}^t p_k \sum_{k =1}^t p_k \widetilde{\Delta}_k + \sum_{k = 1}^{t} p^2_k \sum_{k=1}^t \widetilde{\Delta}_k}{t \sum_{k=1}^t \left(p_k - \frac{\sum_{l=1}^t p_l}{t}\right)^2}. \nonumber 
\end{align}
In the proof of Lemma \ref{lem:delta-t}, we showed that the denominator is lower bounded as 
\[ t \sum_{k=1}^t \left(p_k - \frac{\sum_{l=1}^t p_l}{t}\right)^2 \geq \mathcal{O}(t). \nonumber \]
Also, from the definition of $\widetilde{\Delta}_k$ and Lemma \ref{lem:delta-t}, it follows that
\[ \widetilde{\Delta}_k \leq \mathcal{O}\left(m k^{-1}\right). \nonumber \]
Now, we can simplify $\mathbb{E}[\hat{b}_{t+1} - \tilde{b}] $ further as
\begin{align}
& \mathbb{E}[\hat{b}_{t+1} - \tilde{b}] \nonumber \\
& \leq \frac{\left(\sum_{k = 1}^{t}  p^* \delta p e^{-k}  + \left( \delta p e^{-k}  \right)^2 \right) \sum_{k = 1}^{t}\widetilde{\Delta}_k}{\mathcal{O}(t)} - \delta b^i_{t+1} \nonumber \\
& = \mathcal{O}\left( \frac{\sum_{k = 1}^{t}\widetilde{\Delta}_k}{t}\right) - \delta b^i_t \leq \mathcal{O}\left( \frac{\log(T)}{t}\right) - \delta b^i_{t+1}. \nonumber 
\end{align}
Therefore,
\[ \sum_{t = 1}^T p_t\mathbb{E}[\hat{b}_{t} - \tilde{b}] + P_o \leq (\log{T})^2. \nonumber \]
Then, combining Lemma \ref{lem:reg-1} and Lemma \ref{lem:error-basest}, we get the final regret result. Next, we show that {\it individual rationality} is satisfied by our algorithm. 

{\bf Individual Rationality}: 

For convenience, we remove the superscript $i$ from all variables. Let $\mathbb{E}[\epsilon^2_t] = \sigma^2_t$. We consider $U^{*}$ first. 
\begin{align}
& \mathbb{E}[U^{*}] \nonumber \\
& = \mathbb{E}\left[(a+\epsilon_t)\left(\tilde{b}+\frac{\epsilon_t}{d}\right) - \frac{d}{2}\left(\tilde{b} + \frac{\epsilon_t}{d}\right)^2 - p_0\left( \tilde{b} + \frac{\epsilon_t}{d}\right) \right]\nonumber \\
& = a\tilde{b} + \frac{\sigma^2_t}{2d} -\frac{d\tilde{b}^2}{2} - p_0\tilde{b}. \nonumber 
\end{align}
From Lemma \ref{lem:opt-con-resp}, we recall that $q_t = \tilde{b} + \frac{\epsilon_t}{d} -\frac{p_t}{d} + \widetilde{\Delta}_t$. Hence,
\begin{align}
& \mathbb{E}[U_t(q_t)] = a\tilde{b} + a\left( \widetilde{\Delta}_t - \frac{p_t}{d}\right) -\frac{d}{2} \left( \tilde{b} + \widetilde{\Delta}_t - \frac{p_t}{d}\right)^2 - p_0\tilde{b} \nonumber \\
& -p_0\left(\widetilde{\Delta}_t - \frac{p_t}{d} \right) + p_t\left(\frac{p_t}{d} - \widetilde{\Delta}_t\right) + p_t\mathbb{E}[\hat{b}_t - \tilde{b}] +  \frac{\sigma^2_t}{2d}\nonumber \\
& = a\tilde{b} - p_0\tilde{b} -\frac{d\tilde{b}^2}{2} -\frac{d}{2}\left(\widetilde{\Delta}_t - \frac{p_t}{d}\right)^2 \nonumber \\
& + p_t\left(\frac{p_t}{d} - \widetilde{\Delta}_t\right) + p_t\mathbb{E}[\hat{b}_t - \tilde{b}] +  \frac{\sigma^2_t}{2d}. \nonumber 
\end{align}
Therefore,
\begin{align}
& \mathbb{E}[U_t(q_t)] - \mathbb{E}[U^{*}] = p_t\mathbb{E}[\hat{b}_t - \tilde{b}] + p_t\left(\frac{p_t}{d}- \widetilde{\Delta}_t\right)  \nonumber \\  
& -\frac{d}{2}\left(\widetilde{\Delta}_t - \frac{p_t}{d}\right)^2 = p_t\mathbb{E}[\hat{b}_t - \tilde{b}] + \frac{p^2_t}{2d} -\frac{d\widetilde{\Delta}^2_t}{2}. \nonumber 
\end{align}  

Now, following the proof of Lemma \ref{lem:error-basest},
\begin{align}
& \mathbb{E}[\hat{b}_{t+1} - \tilde{b}] \nonumber \\
& = \frac{ -\sum_{k = 1}^t p_k \sum_{k =1}^t p_k \widetilde{\Delta}_k + \sum_{k = 1}^{t} p^2_k \sum_{k=1}^t \widetilde{\Delta}_k}{t \sum_{k=1}^t \left(p_k - \frac{\sum_{l=1}^t p_l}{t}\right)^2}. \nonumber 
\end{align}

Expanding the second term,
\begin{align} 
& \sum_{k = 1}^{t} p^2_k \sum_{k=1}^t \widetilde{\Delta}_k = \sum_{k = 1}^{t} \left(p^{*} + \delta p e^{-k}\right)^2 \sum_{k=1}^t \widetilde{\Delta}_k \nonumber \\
& = t {p^{*}}^2\sum_{k=1}^t \widetilde{\Delta}_k + 2p^{*}\delta p \sum_{k = 1}^t e^{-k}\sum_{k=1}^t \widetilde{\Delta}_k \nonumber \\
& + (\delta p)^2\sum_{k = 1}^t e^{-2k}\sum_{k=1}^t \widetilde{\Delta}_k. \nonumber  
\end{align}

Similarly, expanding the first term,
\begin{align}
& -\sum_{k = 1}^t p_k \sum_{k =1}^t p_k \widetilde{\Delta}_k \nonumber \\
& = -\sum_{k = 1}^t (p^{*} + \delta p e^{-k}) \sum_{k =1}^t (p^{*} + \delta p e^{-k}) \widetilde{\Delta}_k \nonumber\\
& = -t{p^{*}}^2 \sum_{k =1}^t \widetilde{\Delta}_k -t p^{*} \sum_{k =1}^t \delta p e^{-k} \widetilde{\Delta}_k -\sum_{k =1}^t \delta p e^{-k} \sum_{k =1}^t p^{*} \widetilde{\Delta}_k \nonumber \\
& -\sum_{k =1}^t \delta p e^{-k} \sum_{k =1}^t \delta p e^{-k} \widetilde{\Delta}_k. \nonumber 
\end{align}
Therefore,
\begin{align}
& \mathbb{E}[\hat{b}_{t+1} - \tilde{b}] \nonumber \\
& = \frac{ p^{*}\delta p \sum_{k = 1}^t e^{-k}\sum_{k=1}^t \widetilde{\Delta}_k + (\delta p)^2\sum_{k = 1}^t e^{-2k}\sum_{k=1}^t \widetilde{\Delta}_k }{t \sum_{k=1}^t \left(p_k - \frac{\sum_{l=1}^t p_l}{t}\right)^2} \nonumber \\
& + \frac{-t p^{*} \sum_{k =1}^t \delta p e^{-k} \widetilde{\Delta}_k-(\delta p)^2 \sum_{k =1}^t e^{-k} \sum_{k =1}^t e^{-k}}{t \sum_{k=1}^t \left(p_k - \frac{\sum_{l=1}^t p_l}{t}\right)^2}. \nonumber 
\end{align}
Using the bound on $\widetilde{\Delta}_k$,
\begin{align}
& \mathbb{E}[\hat{b}_{t+1} - \tilde{b}] \nonumber \\
& = \frac{ p^{*}\delta p \sum_{k = 1}^t e^{-k}\sum_{k=1}^t \widetilde{\Delta}_k + (\delta p)^2\sum_{k = 1}^t e^{-2k}\sum_{k=1}^t \widetilde{\Delta}_k }{t \sum_{k=1}^t \left(p_k - \frac{\sum_{l=1}^t p_l}{t}\right)^2} \nonumber \\
& + \frac{-(\delta p)^2 \sum_{k =1}^t e^{-k} \sum_{k =1}^t e^{-k}}{t \sum_{k=1}^t \left(p_k - \frac{\sum_{l=1}^t p_l}{t}\right)^2} - \delta b_{t+1}  \nonumber \\
& \geq -\mathcal{O}\left(\frac{1}{t}\right) - \delta b_{t+1}. \nonumber 
\end{align}
As in the proof of Lemma \ref{lem:error-basest},
\[ \widetilde{\Delta}^2_k \leq \mathcal{O}\left(m^2 k^{-1}\right). \nonumber \]
Therefore, $\exists$ some $t_0$ and some $\delta p = \mathcal{O}(1)$ such that sufficiently large such that, for all $t' > t_0$
\begin{align}
& \sum_{t = 1}^{T}\mathbb{E}[U_t(q_t)] - \mathbb{E}[U^{*}] + P_o
\geq  \nonumber \\
& \sum_{t = 1}^{T}\left( -\mathcal{O}\left(\frac{1}{t}\right) + \frac{{p^{*}}^2}{2d} -\frac{d\widetilde{\Delta}^2_t}{2}\right)  \nonumber\\
&  \geq -\mathcal{O}\left(\log{T}\right) + \mathcal{O}\left(T\right) > 0. \nonumber 
\end{align} 
Individual rationality follows from here. \end{proof}

\section{Conclusion}
In this work, we study the DR problem where the participating consumers' baselines have to be estimated online. The online nature of our baseline learning problem results in an exploration-exploitation trade-off between learning the baseline and optimizing the overall operating cost simultaneously, with an added complexity that the consumers can have incentives to inflate the baseline estimate. We propose a novel, online learning DR scheme for this problem and show that our approach achieves a low regret of $\mathcal{O}((\log{T})^2)$ over $T$ days of the DR program with respect to the best DR outcome with full information of the baselines and ensures that participating is individually rational for each consumer. The utility of our approach lies in the fact all prior approaches either require large quantity data or the consumers to report their baselines, both of which could be infeasible. Our contribution is a low regret online learning DR scheme. 

\bibliographystyle{IEEEtran} 
\bibliography{Refs.bib}

\end{document}